\let\csname equation*\endcsname\relax
\let\csname endequation*\endcsname\relax 
\DeclareMathOperator*{\argmax}{arg\,max}
\DeclareMathOperator*{\argmin}{arg\,min}
\newcommand\R{\mathbb{R}}
\newcommand\E{\mathbb{E}}
\newcommand\dx{\mathrm{d}}
\newtheorem{lemma}{Lemma}
\newtheorem{proposition}[lemma]{Proposition}
\newtheorem{remark}[lemma]{Remark}
\begin{document}
\title{Mixed Noise and Posterior Estimation with Conditional DeepGEM
}
\author{Paul Hagemann\footnotemark[1],\and Johannes Hertrich\footnotemark[2], \and Maren Casfor \footnotemark[3],  \and Sebastian Heidenreich\footnotemark[3], \and Gabriele Steidl\footnotemark[1]}
\date{\today}
\maketitle

\footnotetext[1]{Institute of Mathematics,
TU Berlin,
Straße des 17.~Juni 136, 
 D-10623 Berlin, Germany,
\{hagemann,steidl\}@math.tu-berlin.de.}

\footnotetext[2]{University College London, j.hertrich@ucl.ac.uk}
\footnotetext[3]{Physikalisch Technische Bundesanstalt, maren.casfor@ptb.de, sebastian.heidenreich@ptb.de}
\renewcommand{\thefootnote}{\arabic{footnote}}

\maketitle

\begin{abstract}
We develop an algorithm
for jointly estimating the posterior and the noise parameters in Bayesian inverse problems, which is motivated by indirect measurements and applications 
from nanometrology with a mixed noise model.
We propose to solve the problem by an expectation maximization  algorithm.
Based on the current noise parameters, we learn in the E-step
a conditional normalizing flow that approximates the posterior.
In the M-step, we propose to find the noise parameter updates again by
an expectation maximization algorithm, which has analytical formulas. 
We compare the training of the conditional normalizing flow with the forward and reverse Kullback-Leibler divergence, and show that our model is able to incorporate information from many measurements, unlike previous approaches. 
\end{abstract}

\section{Introduction}

In a variety of healthcare and other contemporary applications, the variables of primary interest are obtained through indirect measurements, such as in the case of Magnetic Resonance Imaging (MRI) and Computed Tomography (CT).  For some of these applications, the reliability of the results is of particular importance. The accuracy and trustworthiness of the outcomes obtained through indirect measurements are significantly influenced by two critical factors: the degree of uncertainty associated with the measuring instrument and the appropriateness of the (forward) model used for the reconstruction of the parameters of interest (measurand). In this paper, we consider Bayesian inversion to obtain the measurand from signals measured by the instrument and a noise model that mimics background noise coming from the instrument and the variation of the measurement, depending  on the forward model.
 Within this framework, we developed an extension of the expectation maximization (EM) algorithm that is able to handle a Bayesian inversion with a measurement noise model.  As a result, we obtain the posterior distribution for the parameters of interest (distribution of the measurand), which is a measure of the reliability of the measurement results. To demonstrate the applicability and effectiveness we apply the algorithm to two real examples in nanometrology, i.e., EUV Scatterometry. 
The key focus of the work is the development of a noise-adapted posterior sampler based on DeepGEM \cite{gao2021deepgem}, which can incorporate information from several measurements simultaneously.

In this context we consider Bayesian inverse problems  
\begin{equation} \label{inverse}
Y_{\theta} = F(X) + \eta_{\theta},
\end{equation}
with a possibly nonlinear forward operator $F\colon \R^d \to \R^n$ and 
a random noise variable $\eta_{\theta}$ which depends on an unknown parameter $\theta$ and on $F(X)$.  Note, $Y$ describes the signals of the instrument whereas $X$ are the parameters of interest.
The posterior (parameter distribution) $P_{X|Y_{\theta}=y}$ for observations $y$ will ultimately depend on these parameters $\theta$, as the likelihood $P_{Y_{\theta}|X=x}$ depends on them. 
Therefore, we aim to estimate the parameter $\theta$ from observations $y_i \in \mathbb{R}^n$, $i=1,...,N$, where $N$ is possibly small.
There exists plenty of literature on estimating the standard deviation $\sigma$ within the Gaussian noise model
$\eta_{\theta} = \eta_{\sigma} \sim \mathcal{N}(0,\sigma^2 I_n)$.
However, motivated by applications in nanometrology \cite{HGB2018}, we are interested in a mixture of additive and multiplicative Gaussian noise of the form 
\begin{equation} \label{noise}
\eta_\theta = \eta_{(a,b)} =
\eta_1 + \eta_2, \quad \eta_1 \sim \mathcal{N}(0, a^2{I_n}),; 
\eta_2 \sim  \mathcal{N}\big(0, b^2 \text{diag} (F(x)^2) \big), 
\end{equation}
where $ F(x)^2 = (F_i(x)^2)_{i=1}^n$ and the identity in $\mathbb{R}^{n\times n}$ is given by $I_n$.
For convenience we assume that the instrument noise and other sources can be described by the simple Ansatz made here. 
In general different noise models may appear in the applications. The noise model Eq.\eqref{noise} was used in several previous studies in optics \cite{HGB2015, HGB2018, herrero2021uncertainties, saadeh2021time} and analyzed in \cite{dunlop2019multiplicative}. A similar noise model appears in analytical chemistry \cite{rocke1995two} and the study of gene expression arrays \cite{rocke2001model}. It belongs to the class of heteroskedastic noise models \cite{foi2009clipped} and an algorithm for parameter estimation in a slightly different problem was proposed in \cite{foulley97}. Learning the noise model without any parametric form was done using NFs in \cite{abdelhamed2019noise}.

The standard approach for parameter estimation is \emph{maximum likelihood estimation}. 
That is, we choose $\theta$ as the minimizer of the negative log likelihood function
$$
\mathcal L(\theta)=\frac{1}{N}\sum_{i=1}^N \log(p_{\theta}(y_i)),
$$
where $p_{\theta}$ is the probability density function of $Y_\theta$.
However, in our case this function involves a high-dimensional integral of the form $\int_x p_{Y_{\theta}|X=x}(y) dP_X(x)$ which is intractable to compute. 
As a remedy, we exploit EM algorithms which were introduced in \cite{DLR1977}, see also \cite{BN2006} for an overview and \cite{emforhyper,Nan_2020_CVPR} for applications for parameter estimation in inverse problems.
The basic idea is to iteratively compute the posterior distribution $P_{X|Y_{\theta^{(r)}}=y_i}$ for a current estimate $\theta^{(r)}$ of $\theta$ and then updating this estimate to $\theta^{(r+1)}$ based on this posterior distribution.
Here, the computation of $P_{X|Y_{\theta^{(r)}}=y_i}$ is called E-step, while the update of $\theta$ is called M-step.
Intuitively, this corresponds to the idea that the distribution of $F(X)$ is approximately concentrated on a lower-dimensional manifold and consequently the distance of the $y_i$ to this manifold contains the information of the noise parameters $\theta$.

Recently, Gao et al.~\cite{gao2021deepgem} proposed to solve the E-step by a normalizing flow (NF) \cite{dinh2017density} using the reverse/backward Kullback-Leibler (KL) divergence as loss function. 
For the M-step, they apply a stochastic gradient ascent algorithm.
Note that in general the same procedure can be applied to estimate parameters of the forward operator instead of the noise level, see \cite{Laroche_2024_WACV}.
This approach has several drawbacks. The model needs to be retrained for every new observations $y$ and cannot profit from many observations that follow the same error parameters. 
Furthermore, the reverse KL is known to be \emph{mode-seeking}. That is, it tends to recover only one mode of multimodal distributions, which incorporates a significant approximation error, see the discussions in \cite{HHS2021, minka2005, WKN2020} for more details.  Very related is also the JANA framework \cite{pmlr-v216-radev23a}, where the authors propose to learn forward (likelihood), posterior and summary network together in an amortized (i.e. conditional manner). However, they optimize it not iteratively, since they do not treat it as an EM framework and do not discuss noise modelling. The main idea of this paper was presented by some of the authors in a one-page extended abstract in \cite{ramos_2023_8393048}.

\paragraph{Contributions}
First, we propose to use the conditional normalizing flows \cite{ALKRK2019,winkler2023learning} in the E-step.
This allows the incorporation of \emph{several} measurements from the same error model and to solve the inverse problem for all measurements \emph{simultaneously}. 
Fortunately, the forward KL \cite{ALKRK2019} can be used as loss function for training the conditional NFs which makes the method mode covering.
Second, we propose an inner EM algorithm for solving the M-step more efficiently.
For our special noise model \eqref{noise}, we deduce analytic expressions for 
E- and M-steps of this inner algorithm. The performance of our approach will be demonstrated on two applications from nano-optics. In particular, we propose a conditional version of DeepGEM and benchmark it against forward conditional DeepGEM, where the reverse KL is replaced by a forward KL.

\paragraph{Organization}
We start in Section \ref{sec:em} by recalling the general EM algorithm.
Then, in Section \ref{sec:parameter_est}, we construct the E-step and M-step
for our application.
That is, we show how conditional normalizing flows can be incorporated into the E-step and describe how the M-step can be solved for our noise model
with an ``inner'' EM algorithm which steps can be given in a closed analytical form.
Some of the technical computations are postponed to \ref{m_step_deriv}.
We test our algorithms on two nano-optics problems, which is done in Section \ref{sec:exp}.
Finally, conclusions are drawn in Section \ref{sec:conc}.

\section{EM Algorithm} \label{sec:em}
In this section, we introduce the EM algorithm as a 
maximization-maximization algorithm of an evidence lower bound.
A general introduction into the EM algorithm can be found, e.g.,~in \cite{BN2006}.

Let $\{Y_\theta:\theta\in\Theta\}$ be a family of $n$-dimensional random variables 
having probability density functions $p_\theta$, $\theta\in\Theta$.
Given i.i.d.~samples $y_1,...,y_N\in\R^n$ from $Y_{\theta^*}$  for some unknown $\theta^*$, which we want to approximate by computing the maximum log-likelihood
estimator 
$$
\hat \theta=\argmax_{\theta\in\Theta} \mathcal L(\theta),\quad \mathcal L(\theta)\coloneqq \frac1N\sum_{i=1}^N \log(p_{\theta}(y_i)).
$$
In the literature, the term $\log(p_\theta(y))$ is also called \emph{evidence} of $y$ under $\theta$.
As in many applications it is hard to maximize $\mathcal L$, we introduce an absolute continuous
$d$-dimensional auxiliary random variable $X$ such that the joint density $p_{X,Y_\theta}$ exists and is easy to evaluate. Then, it holds by the law of total probability and Jensen's inequality, for any probability density function $q$ on $\R^d$, that
\begin{align}
\log(p_\theta(y))&=\log\Big(\int_{\R^d}p_{X,Y_\theta}(x,y)\dx x\Big)
=\log\Big(\int_{\R^d}\frac{p_{X,Y_\theta}(x,y)}{q(x)} q(x)\dx x\Big)\\
&\geq\int_{\R^d}\log\Big(\frac{p_{X,Y_\theta}(x,y)}{q(x)}\Big) q(x)\dx x \eqqcolon \mathcal F(q,\theta|y).
\end{align}
We call the random variable $X$ the hidden variable  and the expression $\mathcal F(q,\theta|y)$ the \emph{evidence lower bound} (ELBO).
Now, instead of maximizing the log-likelihood function directly, the EM algorithm is a maximization-maximization algorithm for the ELBO, i.e.,~starting
with an initial estimate $\theta^{(0)}$, it
consists of the following two steps:
\begin{align}
\text{E-step:} &\qquad q_i^{(r+1)}=\argmax_{q\in \mathrm{pdf}(d)} \mathcal F(q,\theta^{(r)}|y_i),\quad i=1,...,N\label{eq_estep},\\
\text{M-step:} &\qquad
\theta^{(r+1)} =\argmax_{\theta\in\Theta} \sum_{i=1}^N\mathcal F(q_i^{(r+1)},\theta|y_i)\label{eq_mstep},
\end{align}
where $\mathrm{pdf}$ is the space of $d$-dimensional probability density functions. 

The E-step \eqref{eq_estep} can be solved based on the following standard lemma which can be found, e.g., in \cite[Section 9.4]{bishop}. For convenience, we provide the simple proof. 
Recall that the \emph{Kullback-Leibler} (KL) \emph{divergence} of two probability measures $P,Q$ with densities $p,q$ is defined by
$$
\text{KL}(P,Q) \coloneqq  \int_{\R^d} p(x) \log \frac{p(x)}{q(x)} \, \dx x ,
$$
if $P$ is absolutely continuous with respect to $Q$, and 
$\text{KL}(P,Q) = + \infty$ otherwise. Further, we use the convention $0 \log 0 = 0$.

\begin{lemma}\label{lem_em_kl}
Let $X \in \R^d$ be a absolute continuous  random variable 
and let $Q$ be an absolutely continuous measure on $\R^d$ with probability density function $q$.
Then it holds, for any $y\in\R^n$, that
$$
\log(p_\theta(y)) - \mathcal F(q,\theta|y)
=
\mathrm{KL}(Q,P_{X|Y_{\theta}=y}).
$$
\end{lemma}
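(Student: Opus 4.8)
The plan is to unfold the definition of the ELBO $\mathcal F(q,\theta|y)$ and use the elementary factorization of the joint density into conditional density times evidence. First I would write, by the definition of the conditional density (Bayes' rule), $p_{X,Y_\theta}(x,y) = p_{X|Y_\theta=y}(x)\,p_\theta(y)$ at every $x$, which is legitimate as soon as $p_\theta(y)>0$ (the relevant case, since the $y_i$ are drawn from $Y_{\theta^*}$; if $p_\theta(y)=0$ one treats this degenerate case separately). Substituting this into the logarithm appearing in $\mathcal F(q,\theta|y)=\int_{\R^d}\log\bigl(p_{X,Y_\theta}(x,y)/q(x)\bigr)q(x)\,\dx x$ and using $\log(ab)=\log a+\log b$ splits the integrand into the constant-in-$x$ term $\log p_\theta(y)$ plus the term $\log\bigl(p_{X|Y_\theta=y}(x)/q(x)\bigr)$.

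Next I would integrate the two pieces against $q$. Since $q$ is a probability density, $\int_{\R^d}\log p_\theta(y)\,q(x)\,\dx x=\log p_\theta(y)$, and by the definition of the KL divergence (with first argument $Q$, density $q$, and second argument $P_{X|Y_\theta=y}$, density $p_{X|Y_\theta=y}$) we have $\int_{\R^d}\log\bigl(p_{X|Y_\theta=y}(x)/q(x)\bigr)q(x)\,\dx x=-\mathrm{KL}(Q,P_{X|Y_\theta=y})$. Hence $\mathcal F(q,\theta|y)=\log p_\theta(y)-\mathrm{KL}(Q,P_{X|Y_\theta=y})$, and rearranging yields the claimed identity.

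The hard part, or rather the only point needing care, is justifying the splitting of the integral without producing an undefined $\infty-\infty$, i.e.\ handling the case where $Q$ is not absolutely continuous with respect to $P_{X|Y_\theta=y}$. In that situation $\mathrm{KL}(Q,P_{X|Y_\theta=y})=+\infty$ by convention, and I would check that $\mathcal F(q,\theta|y)=-\infty$ as well, so the identity still holds in $[-\infty,\infty)$: non-absolute-continuity means $q>0$ on a set where $p_{X|Y_\theta=y}$ (equivalently $p_{X,Y_\theta}(\cdot,y)$) vanishes, on which $\log\bigl(p_{X,Y_\theta}(x,y)/q(x)\bigr)=-\infty$, while on $\{q=0\}$ the convention $0\log 0=0$ contributes nothing; thus the integral defining $\mathcal F$ diverges to $-\infty$. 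When instead $Q\ll P_{X|Y_\theta=y}$, the Radon–Nikodym derivative is $q/p_{X|Y_\theta=y}$, all terms above are well defined (the KL term possibly $+\infty$, matched by the ELBO term possibly $-\infty$), and the term-by-term computation is valid. Splitting into these two cases completes the proof.
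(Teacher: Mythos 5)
Your proof is correct and follows essentially the same route as the paper's: factor $p_{X,Y_\theta}(x,y)=p_{X|Y_\theta=y}(x)\,p_\theta(y)$, split the logarithm, integrate against $q$, and identify the KL term. The only difference is that you additionally handle the degenerate cases ($p_\theta(y)=0$ and $Q\not\ll P_{X|Y_\theta=y}$), which the paper's proof silently omits; this is a welcome but minor refinement, not a different argument.
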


\begin{proof}
By definition of the conditional distribution, we have
\begin{align*}
&\quad\log(p_\theta(y)) - \mathcal F(q,\theta|y)\\
&=\log(p_\theta(y)) - \int_{\R^d}q(x)\log\Big(\frac{p_{X|Y_\theta=y}(x)p_{\theta}(y)}{q(x)}\Big)\dx x\\
&=\log(p_\theta(y))-\int_{\R^d}q(x)\log(p_{\theta}(y))\dx x+\int_{\R^d}q(x)\log\Big(\frac{q(x)}{p_{X|Y_\theta=y}(x)}\Big)\dx x\\
 &=\log(p_\theta(y))-\log(p_{\theta}(y))+\mathrm{KL}(Q,P_{X|Y_{\theta}=y})=\mathrm{KL}(Q,P_{X|Y_{\theta}=y}).
\end{align*}
\end{proof}

As the KL divergence $\mathrm{KL}(Q,P)$ is minimal if and only if $Q=P$,
the lemma implies that the solutions $q_i$ of the E-step \eqref{eq_estep} are given explicitly by 
\begin{equation}\label{eq_estep_sol}
\text{E-step:} \qquad q_i^{(r+1)}=
p_{
X | Y_{\theta^{(r)}}=y_i},\quad i=1,...,N.
\end{equation}

For solving the M-step \eqref{eq_mstep}, we decompose the ELBO as
\begin{equation}\label{eq_elbo_decomp}
\mathcal F(q,\theta|y)=\E_{x\sim q}[\log(p_{X,Y_\theta}(x,y))]-\E_{x\sim q}[\log(q(x))].
\end{equation}
Note, that only the first summand depends on $\theta$.
Using this decomposition and the explicit form \eqref{eq_estep_sol} of the $q_i$ in the M-step,
we obtain the classical EM algorithm as proposed in \cite{DLR1977}:
\begin{equation} \label{em}
\text{M-step:} \qquad \theta^{(r+1)}=\argmax_{\theta\in\Theta} \mathcal Q(\theta,\theta^{(r)}),
\end{equation}
\begin{align}
\text{E-step:} \qquad \mathcal Q(\theta,\theta^{(r)})&=\sum_{i=1}^N \E_{x\sim P_{X|Y_{\theta^{(r)}}=y_i}}[\log(p_{X,Y_\theta}(x,y_i))].
\end{align}
A convergence analysis of the EM algorithm based on KL proximal point algorithms was done
in \cite{CH2000,CH2008}.
In particular, we obtain the following convergence properties.

\begin{proposition}\label{prop_conv_EM}
Let the sequence $(q^{(r)},\theta^{(r)})$ be generated by the EM algorithm. Then, the following holds true.
\begin{enumerate}[(i)]
    \item The sequence of ELBO-values $(\mathcal F(q^{(r)},\theta^{(r)}))_r$ is monotone increasing.
    \item The sequence of likelihood values $\mathcal L(\theta^{(r)})$ is monotone increasing.
\end{enumerate}
\end{proposition}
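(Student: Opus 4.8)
The plan is to establish both monotonicity statements directly from the maximization-maximization structure of the EM iteration together with Lemma~\ref{lem_em_kl}. For part (i), I would chain two inequalities. First, since $q^{(r+1)}$ maximizes $\mathcal F(\cdot,\theta^{(r)}\mid y)$ over all pdfs (the E-step \eqref{eq_estep}), we get $\mathcal F(q^{(r+1)},\theta^{(r)}\mid y)\ge \mathcal F(q^{(r)},\theta^{(r)}\mid y)$. Second, since $\theta^{(r+1)}$ maximizes $\sum_i\mathcal F(q^{(r+1)},\theta\mid y_i)$ over $\Theta$ (the M-step \eqref{eq_mstep}), we get $\sum_i\mathcal F(q^{(r+1)},\theta^{(r+1)}\mid y_i)\ge \sum_i\mathcal F(q^{(r+1)},\theta^{(r)}\mid y_i)$. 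Composing these (summing the first over $i=1,\dots,N$) yields $\sum_i\mathcal F(q^{(r+1)},\theta^{(r+1)}\mid y_i)\ge \sum_i\mathcal F(q^{(r)},\theta^{(r)}\mid y_i)$, which is exactly the claimed monotonicity of the (aggregated) ELBO values.

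For part (ii), I would use Lemma~\ref{lem_em_kl}, which gives $\log(p_\theta(y_i)) = \mathcal F(q,\theta\mid y_i) + \mathrm{KL}(Q,P_{X\mid Y_\theta=y_i})$ for any pdf $q$. Applying this with $\theta=\theta^{(r+1)}$ and $q=q^{(r+1)}$, and noting the KL term is nonnegative,
\begin{align*}
N\,\mathcal L(\theta^{(r+1)}) &= \sum_{i=1}^N\log(p_{\theta^{(r+1)}}(y_i))
= \sum_{i=1}^N\Big(\mathcal F(q^{(r+1)},\theta^{(r+1)}\mid y_i) + \mathrm{KL}(q_i^{(r+1)},P_{X\mid Y_{\theta^{(r+1)}}=y_i})\Big)\\
&\ge \sum_{i=1}^N\mathcal F(q^{(r+1)},\theta^{(r+1)}\mid y_i)
\ge \sum_{i=1}^N\mathcal F(q^{(r+1)},\theta^{(r)}\mid y_i),
\end{align*}
where the last step is the M-step inequality from part (i). Now the key point is that by \eqref{eq_estep_sol} the E-step solution is $q_i^{(r+1)}=p_{X\mid Y_{\theta^{(r)}}=y_i}$, so the corresponding KL term vanishes and Lemma~\ref{lem_em_kl} gives $\mathcal F(q^{(r+1)},\theta^{(r)}\mid y_i)=\log(p_{\theta^{(r)}}(y_i))$. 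Substituting, $N\,\mathcal L(\theta^{(r+1)})\ge \sum_i\log(p_{\theta^{(r)}}(y_i)) = N\,\mathcal L(\theta^{(r)})$, which is the desired monotonicity.

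The argument is essentially a bookkeeping exercise, so there is no serious obstacle; the one thing to be careful about is the precise meaning of ``monotone increasing for the ELBO-values'' in (i), since $\mathcal F$ as defined depends on a single observation $y$. I would read $\mathcal F(q^{(r)},\theta^{(r)})$ as the aggregated quantity $\sum_{i=1}^N\mathcal F(q_i^{(r)},\theta^{(r)}\mid y_i)$ that the EM algorithm actually optimizes, and state this convention explicitly at the start of the proof. With that in place, the chain of inequalities above is complete; alternatively one can simply cite \cite{CH2000,CH2008}, but the self-contained two-line argument via Lemma~\ref{lem_em_kl} is cleaner and I would include it.
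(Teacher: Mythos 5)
Your argument is correct. Note, however, that the paper does not actually write out a proof of Proposition~\ref{prop_conv_EM}: it delegates the convergence analysis to the KL proximal point framework of \cite{CH2000,CH2008}, in which the EM iteration is interpreted as a proximal point method with a KL penalty and monotonicity follows from that general theory. What you give instead is the classical direct argument (essentially the one in \cite[Section 9.4]{bishop} or Neal--Hinton): part (i) is the two-inequality chain coming from the maximization--maximization structure of \eqref{eq_estep} and \eqref{eq_mstep}, and part (ii) combines Lemma~\ref{lem_em_kl} with nonnegativity of the KL divergence and the fact that the exact E-step solution \eqref{eq_estep_sol} makes the KL term vanish, so that $\mathcal F(q_i^{(r+1)},\theta^{(r)}\mid y_i)=\log(p_{\theta^{(r)}}(y_i))$. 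Both routes prove the same statements; yours is more elementary and self-contained, while the proximal-point view the paper cites yields additional information (e.g.\ interpretation of the iteration as implicit gradient steps and further convergence results) that the two-line argument does not. Your explicit remark that ``$\mathcal F(q^{(r)},\theta^{(r)})$'' must be read as the aggregated quantity $\sum_{i=1}^N\mathcal F(q_i^{(r)},\theta^{(r)}\mid y_i)$ is a genuine and worthwhile clarification, since the proposition as stated suppresses the dependence on the $N$ observations and the $N$ distinct densities $q_i^{(r)}$. The only cosmetic issue is a notational slip in your display, where the first argument of $\mathrm{KL}$ should be the measure $Q_i^{(r+1)}$ rather than its density $q_i^{(r+1)}$; this does not affect the validity of the argument.
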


\begin{remark}[Generalized EM algorithms]\label{rem_gen_EM}
Several papers propose so-called generalized EM algorithms \cite{HFL2021,LR1994,MR1993,NH1998}. The key idea of these generalizations
is to replace the maximization steps \eqref{eq_estep} and \eqref{eq_mstep} by increase steps.
More precisely, in each iteration the values $q_i^{(r+1)}$ and $\theta^{(r+1)}$ are chosen such that
$$
\mathcal F(q_i^{(r+1)},\theta^{(r)}|y)\geq \mathcal F(q_i^{(r)},\theta^{(r)}|y), \quad \sum_{i=1}^N \mathcal F(q_i^{(r+1)},\theta^{(r+1)}|y)\geq\sum_{i=1}^N \mathcal F(q_i^{(r+1)},\theta^{(r)}|y).
$$
Using such increase steps, generalized EM algorithms often achieve simpler and faster steps, 
than the original EM algorithm even though they might require more steps until convergence.
By construction, part (i) of Proposition~\ref{prop_conv_EM} remains for generalized EM algorithms,
while part (ii) is not longer proved for certain of these algorithms.
\end{remark}

\section{Parameter Estimation in Bayesian Inverse Problems} \label{sec:parameter_est}
Now we consider the inverse problem \eqref{inverse}, where
 we assume that $X$ has density $p_X$.
Given $N$ observations $y_1,...,y_N$ of $Y_\theta$, 
we aim to determine the parameter $\theta$.
We will derive an EM algorithm for this problem, where 
the hidden variable is given by the ground truth random variable $X$.
In particular, we will deal with the noise model \eqref{noise}. Here the parameter 
$\theta = (a,b)$ can be updated in the M-step analytically.

\subsection{E-Step: Conditional NFs}\label{sec:e_cnf}
As we have seen in \eqref{eq_estep_sol}, 
the E-step corresponds 
to finding the posterior densities 
$$
p_{X|Y_{\theta^{(r)}}=y_i}=\argmin_q\mathcal F(q,\theta^{(r)}|y_i), \quad i=1,\ldots,N,
$$
for given $\theta^{(r)}$.
We propose to approximate these posteriors by conditional NFs.
This extends the so-called DeepGEM from \cite{gao2021deepgem} to the conditional case.
We will see that our approach brings the forward KL instead of the reverse KL 
into the play which has several advantages, see Remark \ref{rem:kl}.

A conditional NF is a mapping $\mathcal T_\phi\colon\R^n\times\R^d\to\R^d$ depending on some parameters $\phi$ such that $\mathcal T_\phi(y,\cdot)$ is invertible for any $y\in\R^n$.
In this paper, $\mathcal T_\phi$ is a neural network. 
There were several architectures for NFs proposed in the literature. They include GLOW \cite{KD2018}, real NVP \cite{dinh2017density}, invertible ResNets \cite{BGCDJ2019,CBDJ2019,H2023} and autoregressive Flows \cite{CTA2019,DBMP2019,HKLC2018,PPM2017}.
They were extended to the conditional setting in \cite{ALKRK2019,DSLM2021,HHS2021generalized_nf,KDSK2020, winkler2023learning}.
The parameters $\phi$ are learned such that 
$$
P_Z \circ T_\phi(y_i,\cdot)^{-1} =  {\mathcal T_\phi(y_i,\cdot)}_\#P_Z\approx P_{X|Y_{\theta^{(r)}}=y_i},
$$
for all $i=1,...,N$, where $P_Z$ is some latent distribution, usually a standard Gaussian one.
Once we have learned the conditional NF $\mathcal T_\phi$ for an appropriate parameter $\theta$
this provides us with a desired approximation of the posterior.
In other words, given $y \in \R^n$, we can sample $z$ from $Z$ and produce a sample
from $P_{X|Y=y}$ by $\mathcal T_\phi(y,z)$.

Now we could learn the conditional NF $\mathcal T_\phi$ by minimizing the loss function
$$
\mathcal J_\text{reverse}(\phi)=\sum_{i=1}^N\mathcal F(p_{\mathcal T_\phi(y_i,\cdot)_\#P_Z},\theta^{(r)}|y_i)\propto\sum_{i=1}^N\mathrm{KL}(\mathcal T_\phi(y_i,\cdot)_\#P_Z,P_{X|Y_\theta=y_i}),
$$
where the last relation follows from Lemma~\ref{lem_em_kl} and ``$\propto$'' indicates equality up to a constant.
In literature, this loss function is known as \emph{reverse} or \emph{backward} KL loss function. Applying the change of variable formula for push-forward measures and Bayes's formula 
this can be rewritten as
\begin{align*}
\mathcal J_\text{reverse}(\phi)
&\propto\sum_{i=1}^N\mathrm{KL}(P_{Z},\mathcal T_\phi(y_i,\cdot)^{-1}_\#P_{X|Y_{\theta^{(r)}}=y_i})\\
&\propto - \sum_{i=1}^N \E_{z\sim P_Z}
\big[\log(p_{Y_{\theta^{(r)}}|X=\mathcal T_\phi(y_i,z)}(y_i))+\log(p_X(\mathcal T_\phi(y_i,z)))\\
&\qquad+\log(|\mathrm{det}(\nabla_z\mathcal T_\phi(y_i,z))|) \big],
\end{align*}
see \cite{AH2023,AFHHSS2021,KDSK2020} for a detailed explanation and applications.
In order to evaluate these terms we have to be able to evaluate the prior density
$p_X$ as well as the conditional densities $p_{Y_{\theta^{(r)}}|X=x}(y)$, which contains the forward operator
and the noise model for given parameters $\theta^{(r)}$.
Unfortunately, it is known from the literature that the reverse KL is prone to mode collapse, see \cite{minka2005}.
That is, in the case that $P_{X|Y_\theta=y_i}$ is multimodal, it tends to generate only samples from one
of the modes.

As a remedy, we interchange the arguments in the KL divergence in $\mathcal J_\text{reverse}$ and replace the sum over the $y_i$ by the expectation over $P_{Y_{\theta^{(r)}}}$. Then, we arrive at the so-called \emph{forward} KL loss function
\begin{align}
\mathcal J_\text{forward}(\phi)
&=
\E_{y\sim P_{Y_{\theta^{(r)}}}}
\big[\mathrm{KL}(P_{X|Y_{\theta^{(r)}}=y},\mathcal T_\phi(y,\cdot)_\#P_Z) \big]
\\
&=
\E_{y\sim P_{Y_{\theta^{(r)}}}}
\Big[
\E_{x \sim P_{ X|Y_{\theta^{(r)}} =  y}}
\big[
\log \Big(\frac{ p_{X|Y_{\theta^{(r)}} = y}(x) }{ p_{\mathcal T_\phi(y,\cdot)_\# P_Z }(x)}\Big)
\big]
\Big]
\\
&
\propto-\E_{(x,y)\sim P_{X,Y_{\theta^{(r)}}}}\big[\log(p_Z(\mathcal T_\phi(y,\cdot)^{-1}(x)))+\log|\mathrm{det}(\nabla\mathcal T_\phi(y,\cdot)^{-1}(x))|\big].
\end{align}
To compute these terms we need samples 
$(\tilde x_j,\tilde y_j)$, $j=1,...,\tilde N$ 
from the joint distribution 
$P_{X,Y_{\theta^{(r)}}}$. 
Note that such samples can be generated from just knowing the 
$\tilde x_j$ by evaluating the forward operator and the noise model. In this setting, we do not need access to the prior density $p_X$ or the conditional densities 
$p_{Y_{ \theta^{(r)}}|X=x }$. The forward KL is more standard in (conditional) generative modelling \cite{ALKRK2019, dinh2017density, winkler2023learning} due to these properties and is also known as maximum likelihood training \cite{WKN2020}.

\begin{remark}[Forward versus Reverse KL]\label{rem:kl}
Note that in the case that $\mathcal T_\phi$ is an universal approximator, we have for both loss functions that the optimal parameters $\hat \phi$ fulfills $\mathcal T_{\hat \phi}(y_i,\cdot)_\#P_Z=P_{X|Y_{\theta^{(r)}}=y_i}$. This is important, as we propose to replace the reverse KL in the E-step by the forward KL. 
Moreover, the assumptions for training and the approximation properties differ.
For the reverse KL, we have to be able to evaluate the density $p_X$ of the prior distribution, while the forward KL needs samples from $P_X$. In practice it depends on the problem which assumption is more realistic.
On the other hand, the forward KL loss function is not that prone to mode collapse. The universality of conditional normalizing flows has been discussed in \cite{lyu2022paracflows}.
\end{remark}

\subsection{M-Step: Inner EM for Mixed Noise Model}\label{sec:m_em}
As described in \eqref{em}, the M-step is given by
\begin{align}
\theta^{(r+1)}
&=
\argmax_{\theta}\sum_{i=1}^N\E_{x\sim P_{X|Y_{\theta^{(r)}}=y_i}}[\log(p_{X,Y_{\theta}}(x, y_i))].\label{eq:m_opt-prob}
\end{align}
Unfortunately, to the best of our knowledge, an analytic solution of \eqref{eq:m_opt-prob} is not available.
Therefore, we discretize the expectation in \eqref{eq:m_opt-prob} by
\begin{align}
\theta^{(r+1)}=\argmax_{\theta}\sum_{i=1}^N\sum_{k=1}^M\log(p_{X,Y_{\theta}}(x_{i}^k,y_i)),\label{eq:discrete_m_step}
\end{align}
where the $x_{i}^k$, $k=1,...,M$ are sampled from $P_{X|Y_{\theta^{(r)}}=y_i}$.
This can be solved by various iterative methods, e.g., by a stochastic gradient algorithm \cite{kingma2017adam} as done in \cite{gao2021deepgem}.

For our special noise model \eqref{noise} with $\theta = (a,b)$,
we propose to use again an  EM algorithm, 
since both the E- and M-step of the ``inner'' EM can be computed analytically, which will be shown in the following paragraphs. 
We use that for this noise model we have
$$
p_{Y_{\theta}|X=x}(y)=\mathcal N(y|F(x),a^2 I_n+b^2\mathrm{diag}(F(x))).
$$
For simplicity, we assume that we have only $M=1$ samples. 
The case $M>1$ can be reduced to this case by considering $M$ copies of $y_i$.
In our EM algorithm for \eqref{eq:discrete_m_step} we use $V_\theta \sim  a \mathcal N(0,1)$
as hidden variable, which corresponds to the ``additive part'' of the noise.

\paragraph{Inner E-step}
We have to compute the conditional distribution $P_{V_\theta|(X,Y_\theta)=(x,y)}$. 
Using Bayes' formula, we obtain
\begin{align}
\log(p_{V_\theta|(X,Y_\theta)=(x,y)}(v))\propto& \log(p_{Y_\theta|V_\theta=v,X=x}(y))+\log(p_{V_\theta|X=x}(v))\\
\propto& \sum_{j=1}^n-\frac{(y_j-F_j(x)-v_j)^2}{2 b^2F_j(x)^2}-\frac{v_j^2}{2a^2}\\
\propto& \log\Big(\mathcal N\Big(v\Big|\tfrac{a^2(y-F(x))}{a^2+b^2F(x)^2},\mathrm{diag}\big(\tfrac{a^2b^2 F(x)^2}{a^2+b^2 F(x)^2}\big)\Big)\Big),
\end{align}
where the quotients in the last line are understood componentwise and "$\propto$" indicates, that we have equality up to an additive constant independent of $v$.
Consequently, the conditional distribution $P_{V_\theta|(X,Y_\theta)=(x,y)}$ is given by
$$
P_{V_\theta|(X,Y_\theta)=(x,y)} = \mathcal N\Big(\tfrac{a^2(y-F(x))}{a^2+b^2F(x)^2},\mathrm{diag}\big(\tfrac{a^2b^2 F(x)^2}{a^2+b^2 F(x)^2}\big)\Big).
$$

\paragraph{Inner M-step:}
We will just outline the final result. 
The quite technical proof is deferred to appendix \ref{m_step_deriv}, where 
the M-step can be rewritten as
$$
\theta^{(r+1)}=\argmax_{\theta = (a,b) \in \R^2_{\ge 0}}A_1(b)+A_2(a),
$$
where
$$
A_1(b) =\frac{1}{2b^2}c_1^{(r)}-n+\log(b)+\mathrm{const},\quad A_2(a)=\frac{1}{2a^2}c_2^{(r)}-n\log(a)+\mathrm{const},
$$
with
\begin{align}
c_1^{(r)}&=-\frac{1}{N}\sum_{i=1}^N\sum_{j=1}^n\frac{(y_{ij}-F_j(x_i))^2(b^{(r)})^4F_j(x_i)^2}{((a^{(r)})^2+(b^{(r)}F_j(x_i))^2)^2}+\frac{(a^{(r)}b^{(r)})^2}{(a^{(r)})^2+(b^{(r)}F_j(x_i))^2}\\
c_2^{(r)}&=-\frac{1}{N}\sum_{i=1}^N \bigg(\sum_{j=1}^n \Big(\frac{(a^{(r)})^2(y_{ij}-F_j(x_i))}{(a^{(r)})^2+(b^{(r)})^2F_j(x_i)^2}\Big)^2+\frac{(a^{(r)}b^{(r)}F_j(x_i))^2}{(a^{(r)})^2+(b^{(r)}F_j(x_i))^2}\bigg),
\end{align}
and $y_i=(y_{i1},...,y_{in})$ and $F = (F_1,...,F_n)\colon\R^d\to\R^n$.
By setting the derivatives of $A_1$ and $A_2$ to zero, this is equivalent to
$$
a^{(r+1)}=-\frac{c_2^{(r)}}{n},\qquad b^{(r+1)}=-\frac{c_1^{(r)}}{n}, 
$$
which are the update rules we will use. 

\subsection{Resulting Algorithm}

The summary of the two nested algorithms can be seen in Algorithm~\ref{alg_cnf_mixed}. Here, both the E-steps and the M-steps are not run for 1 iteration, but several. In particular the analytical M-step is cheap, and therefore it is intuitive to make use of this. For the E-step we take usually 10 steps to perform posterior updates. The initialization of $a,b$ are done in such a way that we approximate the posterior distribution ``from above''. This is important so that the observed measurements are included in the distribution $P_{Y_{\theta}}$, which is similar to the logdet schedule proposed in \cite{sun2021deep}. It indeed can be shown that making the logdet term larger corresponds to scaling the noise higher for additive Gaussian noise, which makes the estimated distributions broader and therefore prevents mode missing, although we still found that this does not solve the problem completely.

\begin{algorithm}[!ht]
\caption{EM Algorithm Mixed Noise estimation via CNFs}\label{alg_cnf_mixed}
\begin{algorithmic}
\State Input:  $y_1,...,y_N\in\R^{n}$, conditional normalizing flow $\mathcal{T}$ and
initial estimate $a^{(0)},b^{(0)}$, number of $x$ in total $K = 2000 \geq N$.
\For {$r=0,1,..., R$}
\State \textbf{E-Step:} \For {$p=0,1,..., P$} update
 $\mathcal{T}$  according to 
 $$
 \argmin_{\phi}\E_{y\sim P_{Y_\theta}}[\mathrm{KL}(p_{X|Y_\theta=y},p_{\mathcal T_\phi(y,\cdot)_\#P_Z})]\qquad\text{(reverse KL)}
 $$
 \State or 
 $$
 \argmin_{\phi}\E_{y\sim P_{Y_\theta}}[\mathrm{KL}(p_{\mathcal T_\phi(y,\cdot)_\#P_Z},p_{X|Y_\theta=y})]\qquad\text{(forward KL)}
 $$ 
 \State via a gradient descent method where $\theta = (a^r,b^r).$ \\
 \EndFor
\\
\State  Repeat $\tilde{y} = (y_1,..,y_1,...,y_N,...y_N) \in \mathbb{R}^{K\ n}$.
\State  Sample $x_i \sim \mathcal{T}(\tilde{y}_i,z_i)$ for standard normal $z_i$. 

\State \textbf{M-Step:} 
 \For {$l=0,1,..,L$}
\begin{align}
c_1^{(r,l)}&=-\frac{1}{K}\sum_{i=1}^K\sum_{j=1}^n\frac{(\tilde{y}_{ij}-F_j(x_i))^2(b^{(r)})^4F_j(x_i)^2}{((a^{(r)})^2+(b^{(r)}F_j(x_i))^2)^2}+\frac{(a^{(r)}b^{(r)})^2}{(a^{(r)})^2+(b^{(r)}F_j(x_i))^2}\\
c_2^{(r,l)}&=-\frac{1}{K}\sum_{i=1}^K \bigg(\sum_{j=1}^n \Big(\frac{(a^{(r)})^2(\tilde{y}_{ij}-F_j(x_i))}{(a^{(r)})^2+(b^{(r)})^2F_j(x_i)^2}\Big)^2+\frac{(a^{(r)}b^{(r)}F_j(x_i))^2}{(a^{(r)})^2+(b^{(r)}F_j(x_i))^2}\bigg).
\end{align}
\State Update
$
(a^{(r,l)})^2=-\frac{c_2^{(r,l)}}{n},\qquad (b^{(r,l)})^2=-\frac{c_1^{(r,l)}}{n}
$
\EndFor
\State Set $(a^{(r+1)})^2=(a^{(r,L)})^2,\qquad (b^{(r+1)})^2=(b^{(r,L)})^2 $
\EndFor
\end{algorithmic}
\end{algorithm}

\section{Experiments}\label{sec:exp}
We will benchmark our algorithm on two problems from nano-optics, the first one being low-dimensional and the second one harder and more recent. The first was introduced in \cite{HGB2015,HGB2018} and the second one is part of a current research project. The goal of this is to learn both a reasonable posterior reconstruction as well as the error parameters $a,b$ jointly. To showcase the advantages of making the models conditional we also vary the number of measurements and hope that more measurements lead to better reconstructions. 

Generally, we use the PyTorch framework \cite{pytorchpaper} and use FrEia package for implementation of the conditional normalizing flows \cite{freia}. The code is available on GitHub~\footnote{\url{https://github.com/PaulLyonel/ConditionalDeepGEM}}. 
We train our models using the Adam optimizer \cite{kingma2017adam} and fix some hyperparameter choices across the experiments. In particular, we only use a learning rate of 1e-3, $P = 10$, set $K= 2000$, $R = 5000$ in \ref{alg_cnf_mixed} and $L = 20$. The choice of $K$ is in particular constant no matter how many measurements $N$ are used. This allows us to compare whether the information of many measurements is beneficial for the estimation of $a$ and $b$. However, these hyperparameters were not optimized in a grid search and therefore it is likely that one can improve the performance. We generate synthetic measurements via surrogate forward operators with known noise levels $a_{true}, b_{true}$, similar as in \cite[Section 3]{AFHHSS2021}.  In both these experiments, we take these surrogate neural networks as forward operators. The extension to real world measurements and the relation to the true PDE inverse problem is left for future work. This allows us, given some noise parameters, to sample $(x,y)$ data on the fly.

Then we apply our proposed algorithms to learn a and b as well as the posterior reconstructions. Then we are able to compare the models and error parameters on two metrics. The metrics and models evaluated are summarized below. 

\paragraph{Models evaluated}
We will evaluate two EM-based models, one is the conditional version of the DeepGEM method \cite{gao2021deepgem} which we combined with our M-step. Note that amounts to using the reverse KL divergence in algorithm \ref{alg_cnf_mixed}. However we propose to use the forward KL divergence which we call conditional forward DeepGEM. To see the general comparison of EM algorithms, we also implement a grid search over $(a,b)$ and save the ``best'' model of this. The grid search is still possible since we are searching over a two dimensional space, but becomes quickly infeasible for higher dimensional noise models. We will call this grid conditional NF and also evaluate its forward and reverse KL version. 

\paragraph{Metrics} We are going to benchmark the models using the following two metrics. 
\begin{itemize}
\item \emph{Distance to true a and b}: We will consider synthetic data, where the $a,b$ the observations were generated with, are known. This metric is given by $$D((a,b), (a_{true}, b_{true})) = \frac{\vert a - a_{true} \vert}{a_{true}} + \frac{\vert b - b_{true} \vert}{b_{true}}.$$ However, this is a terrible metric as there can be other combinations of a and b which explain the observations equally well. However, we hope that for sufficient observations we will converge to the true a and b. 
\item \emph{ELBO}: From lemma \ref{lem_em_kl} we see that maximizing the ELBO $\mathcal{F}$ leads to minimizing the KL distance to the true posteriors as well as maximizing the likelihoods of the observations under the estimated error parameters $a,b$. This is a good proxy, as both the likelihood of the observations as well as the KL distance to the posteriors are intractable in high dimensions. 
\end{itemize}
By the above discussion, we also obtain a suitable \emph{model selection} criterion. We train all the models for the same amount of steps, but we validate it after every EM-step according to the ELBO for the measurements. We then load the best model for every run and evaluate our metrics. 

\subsection*{Scatterometry}
For chip manufacturing the control of nanopattern in the lithography process is essential and non-destructive measurement methods with high throughput are desirable. In addition to standard scanning electron microscopy (low throughput, destructive) scatterometry is gaining importance.  
Scatterometry is a non-destructive optical measurement technique for assessing lithography's periodic nanostructures' critical dimensions (CDs) \cite{huang2004spectroscopic}. In this measuring method, nanostructured periodic surfaces are illuminated with light and refraction patterns are detected. From these patterns geometry parameters are reconstructed by solving an inverse problem. According to Eq.~\eqref{inverse} observations are given by the refraction patterns, the forward operator is determined by time-harmonic Maxwell’s equations and the noise is given by the instrument as well as the model error. 

In the following, we consider two examples to demonstrate the performance of the developed algorithm for applications in nanometrology of chip production. The first example considers a typical photomask for extreme ultra violet light (EUV) and the second a line grating.

\subsubsection*{Photo mask}
The EUV-photomask considered here consists of periodic absorber lines, capping layers and a multilayer stack functioning as a mirror for 13.4 nm wavelength waves (EUV range). Key geometry parameters include the line width, height and the angle of the sidewall (3 parameters). The refraction patterns comprise 23 intensities (maxima of the refractive orders) and the measurement/model noise is assumed to be distributed according to our mixed noise model.

The problem has $x$-dimension $3$ and $y$-dimension $23$ and therefore is well-suited for first experiments. Furthermore, by \cite{HGB2018} it is known that the posterior is indeed multimodal. The prior is chosen uniformly in $[-1,1]$ and its density is approximated like in \cite{HHS2021} for the reverse KL.
For the example we train both the conditional DeepGEM as well as the conditional forward DeepGEM using data from the finite element method (FEM) based forward model \cite{HGB2018}, which is approximated by a surrogate neural network. The forward DeepGEM is a bit quicker to train. The true a and b used to generate simulated signals of the instrument  were set to $0.005$ and $0.1$ respectively. We benchmark now the four methods, the conditional DeepGem with forward and reverse KL as well as the grid conditional normalizing flows (gridCNF) with forward and reverse KL. For the grids we chose an equispaced grid with 8 points for a and for b. For a this ranged from 0.001 to 0.03 and for b from 0.01 to 0.2. Concerning training time, the forward and reverse conditional DeepGEM were similar with 9 minutes per run. The grid versions took approximately 13 minutes to train, where we took 1200 optimizer steps per grid point. Generally, we can see that the grid methods get outperformed by our EM versions, although they take a longer time. 
From Table \ref{old_ab}  and Fig.\ref{fig:err} we can see that the forward KL and the reverse KL have both similar performance in terms of distance to the true a and b, where the forward KL seems to have a slight edge in the case of many measurements. However, in terms of ELBO, we observe in  Table  \ref{old_elbo} that the forward KL performs favorably.
This is somewhat remarkable, as the reverse KL is the ELBO objective when ignoring the parameters independent of the flow.  Considering posterior measures obtained form simulated measurements 
we realize that the reverse KL 
does not exactly reproduce the modes in some of the examples, see Fig. \ref{fig_old} whereas the 
forward KL performs quite well. The inability
of the reverse KL to detect the correct modes of the posterior can indeed explain the better performance of the forward conditional DeepGEM. Both algorithms are improving with more measurements, see Fig. \ref{fig:forward_error} and \ref{fig:rev_error}.

\begin{table}[]
\centering
\begin{tabular}{ |c||c|c|c|c| } 
 \hline
 number measurements &1 &2 &4 &8 \\ \hline
 forward condDeepGEM & \textbf{0.50} & 0.37  & 0.33 &  \textbf{0.20}\\ 
 reverse condDeepGEM & 0.60 & 0.36  & \textbf{0.31} & 0.22 \\
 forward gridCNF & 0.65 & 0.41 & 0.44 & 0.32 \\
 reverse gridCNF & 0.53 & \textbf{0.34}  & \textbf{0.31} & 0.42 \\
 \hline
\end{tabular}
 \caption{Distance of estimated $a$ and $b$ to the true ones over 10 runs.}
 \label{old_ab}
\end{table}
\begin{table}
\centering
\begin{tabular}{ |c||c|c|c|c| } 
 \hline
 number measurements &1 &2 &4 &8 \\ \hline
 forward condDeepGEM & \textbf{81.12} & \textbf{80.55}  & \textbf{79.36} &  \textbf{79.20}\\ 
 reverse condDeepGEM & 80.92 & 79.80  & 78.62 & 78.59 \\
 forward gridCNF & 77.54 & 76.89 & 76.33 & 76.61 \\
 reverse gridCNF & 78.99 & 77.55  & 76.32 & 76.50 \\
 \hline
\end{tabular}
 \caption{ELBO of the algorithms over 10 runs. Calculated for the measurements based on 2000 samples. }
\label{old_elbo}
\end{table}

\begin{figure}
\begin{subfigure}[t]{.49\textwidth}
\includegraphics[width=\linewidth]{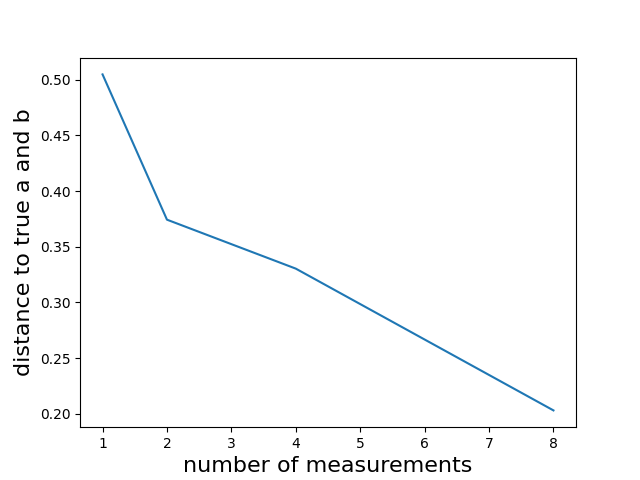}
        \caption{Forward KL conditional DeepGEM.}
        \label{fig:forward_error}
        \end{subfigure}
\begin{subfigure}[t]{.49\textwidth}
\includegraphics[width=\linewidth]{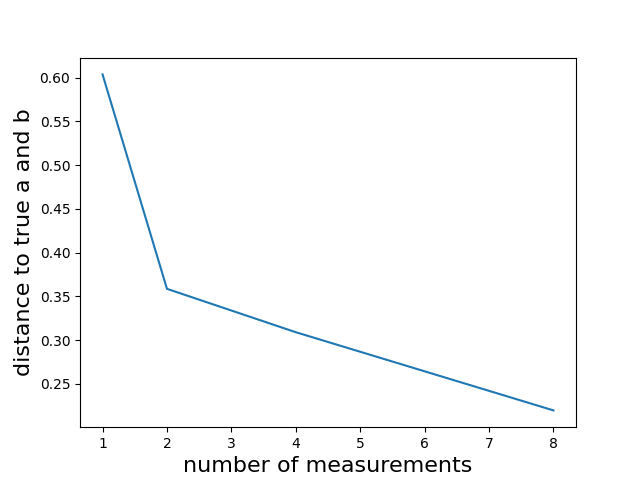}
    \caption{Reverse KL conditional DeepGEM.}
    \label{fig:rev_error}
\end{subfigure}
\caption{Distance to the hyperparameters (a,b) for forward and reverse KL conditional DeepGEM.}
\label{fig:err}
\end{figure}

\begin{figure}
\centering

\begin{subfigure}[t]{.49\textwidth}
\includegraphics[width=\linewidth, scale = 0.9]{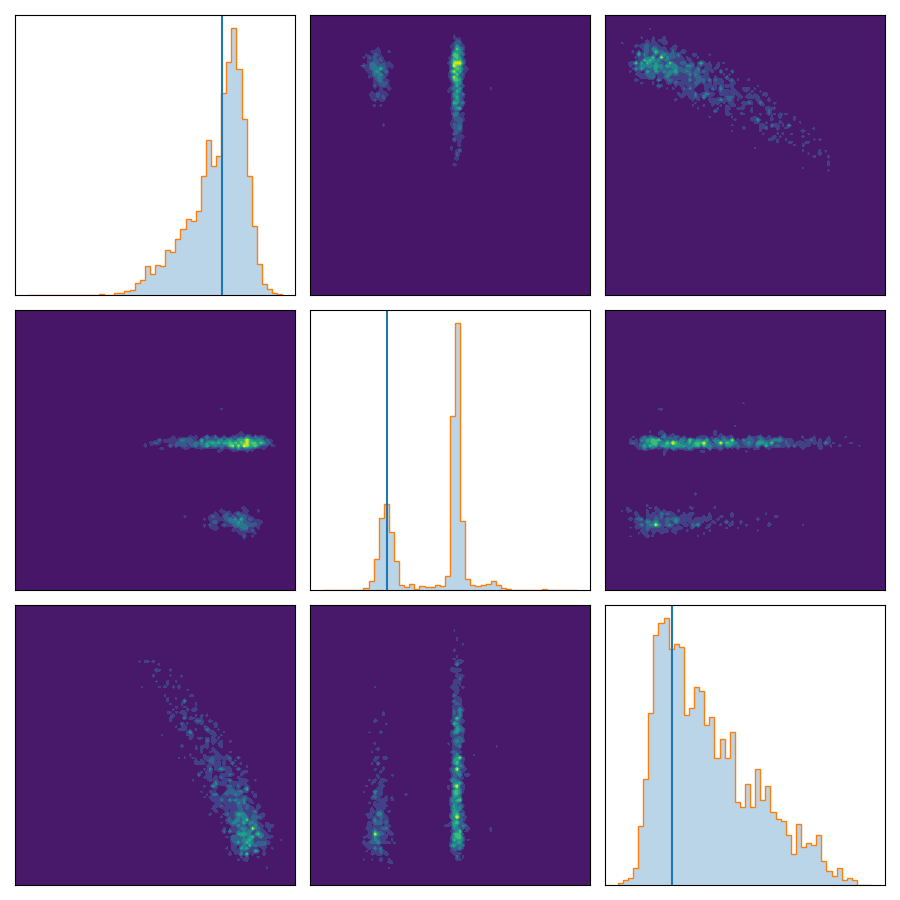}
        \caption{Posterior reconstruction for forward KL conditional DeepGEM for one simulated measurement.}
\end{subfigure}
\begin{subfigure}[t]{.49\textwidth}
\includegraphics[width=\linewidth, scale = 0.9]{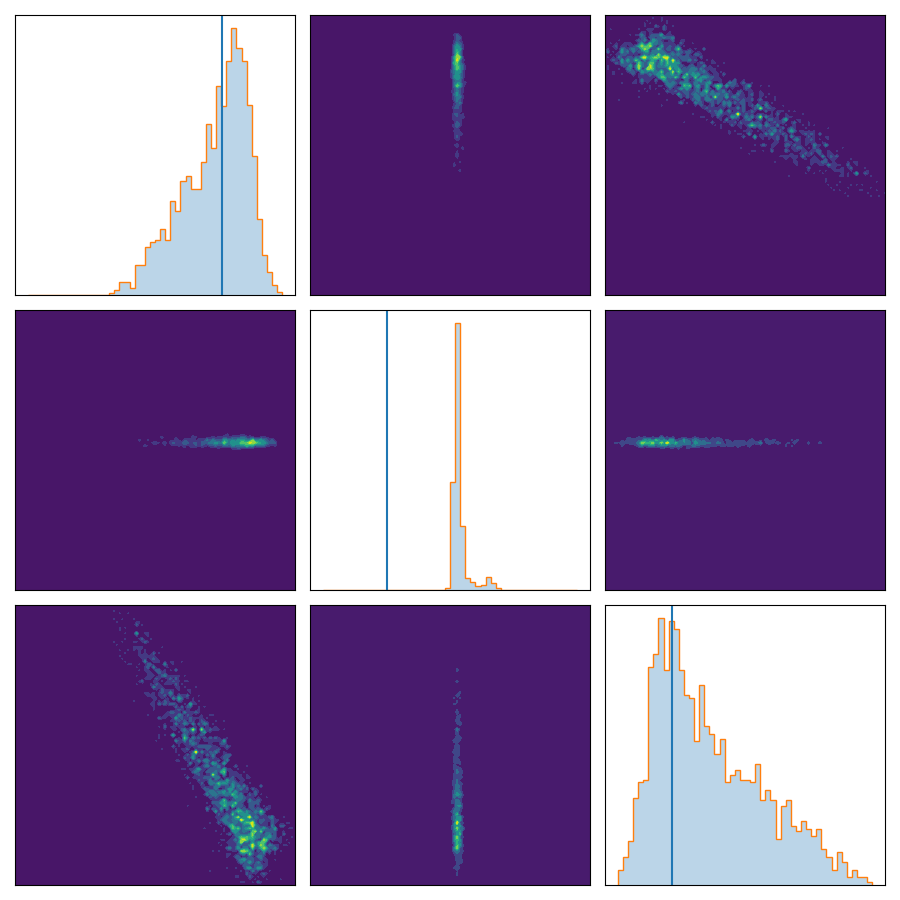}
        \caption{Posterior reconstruction for reverse KL conditional DeepGEM for one simulated measurement.}
\end{subfigure}
\begin{subfigure}[t]{.49\textwidth}
\includegraphics[width=\linewidth, scale = 0.9]{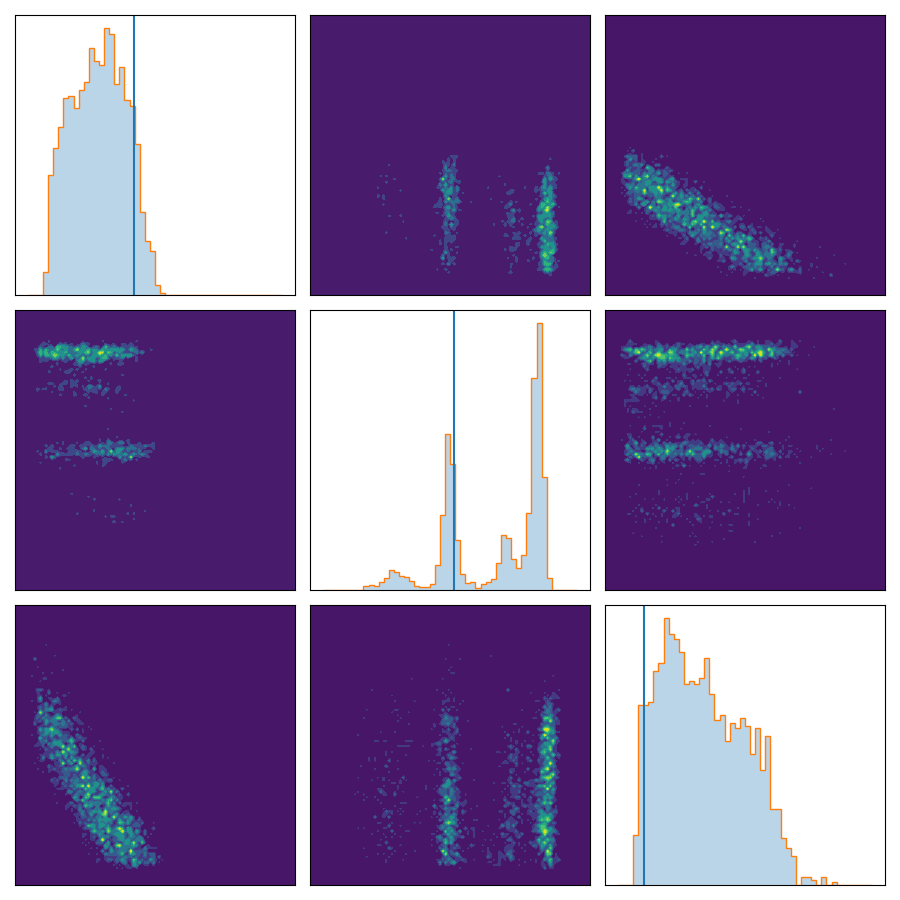}
        \caption{Posterior reconstruction for forward KL conditional DeepGEM for another simulated measurement.}
\end{subfigure}
\begin{subfigure}[t]{.49\textwidth}
\includegraphics[width=\linewidth, scale = 0.9]{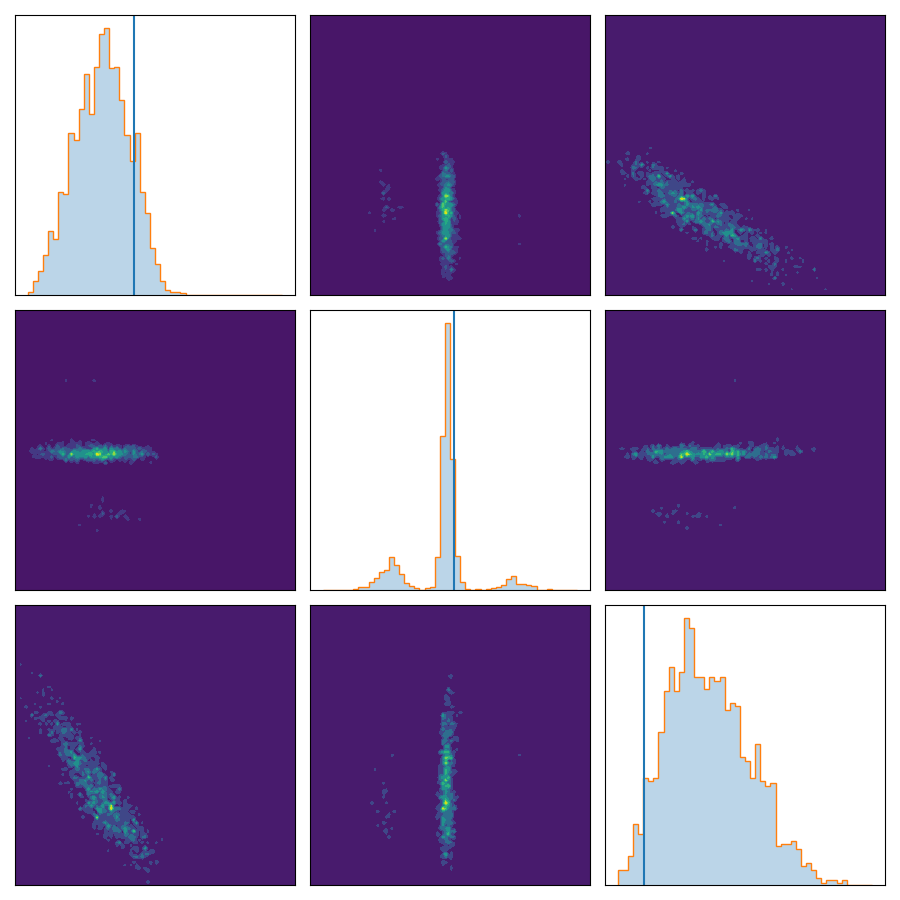}
        \caption{Posterior reconstruction for reverse KL conditional DeepGEM for another simulated measurement.}
\end{subfigure}
\caption{Posterior reconstructions for different measurements using forward/reverse conditional DeepGEM via one dimensional histograms on the diagonal and two dimensional on the offdiagonal. Ground truth x is depicted by the blue line.}
\label{fig_old}
\end{figure}

\subsubsection*{Line grating with oxid layer}
\label{subsec_line_grating}
The second example involves a periodic line grating consisting of a silicon bulk and an oxide layer on  
top. Similar samples were investigated e.g. in \cite{lohr2023nanoscale}. In addition to the geometry parameters, as used in the previous example, the optical constants (OC) of the materials are assumed to be not accurately known. In practice this is often the case if the material composition was changed due to oxidation and contamination of the sample. So  for each material, there are two parameters for the complex refractive index (real and imaginary part)~\cite{henke1993}, which depend on  the material density. Hence we change the OC by varying the densities of the material, i.e., silicon and silicon-oxide.
This results in two parameters changing the OC and five parameter describing the geometry of the line grating.
The refraction pattern are detected for a single  wavelength of the incoming light beam under an angle of incidence of \mbox{$30^{\circ}$},
 from the sample plane and a set of seven azimuth angles between \mbox{$0^{\circ}$} and \mbox{$6^{\circ}$}, the sample is rotated by in the plane. In sum we obtained 77 simulated intensities and hence end up with  x-dimension seven and  y-dimension 77. For simulations the forward model was solved with the software package JCMsuite~\footnote{https://jcmwave.com/}, based on the FEM which solves a boundary value problem following from the Maxwell’s equations \cite{farchminEfficientApproachGlobal2019}. In order to get a strong response for the OC of the oxide layer we used a wavelength of $12.99\,\si{nm}$, right before the absorption edge~\cite{andrleAnisotropyOpticalConstants2021,henke1993}. For this work we standardized the data from the forward simulation \cite{casfor_2024_10580011} on $[0,1]$ and chose a uniform prior for the $x$-data.

Again, we plot two example posterior distributions calculated.
The distribution shapes seen in \ref{fig:new_posts} clearly reflect 
the sensitivity of  the forward operator against the line height (parameter 0), the silicon oxide density (parameter 4) and non-sensitive against the layer roughness (parameter 6). 
The true $a$ and $b$ were set to $0.03$ and $0.25$ respectively. Again as in the first scatterometry example we can see in Table \ref{new_ab} and \ref{new_elbo} that the forward KL performs a bit better in distance to the true a and b as well as ELBO. Similarly, one can observe that the first x-component, the height can be multimodal, where the reverse KL can indeed miss the mode. This can be observed in Fig. \ref{fig:new_posts}. Similarly, the distance to the true a and b decreases  by adding more simulated measurement values, which can be seen in Fig. \ref{fig:err1}.

\begin{table}[]
\centering
\begin{tabular}{ |c||c|c|c|c| } 
 \hline
 number measurements &1 &2 &4 &8 \\ \hline
 forward & \textbf{0.93}  & \textbf{0.19}  & \textbf{0.09} &  \textbf{0.08}\\  \hline
 reverse & 1.07 & 0.20  & 0.10 & 0.11 \\
 \hline
\end{tabular}
 \caption{Distance of estimated $a$ and $b$ to the true ones over 5 runs.}
 \label{new_ab}
\end{table}
\begin{table}
\centering
\begin{tabular}{ |c||c|c|c|c| } 
 \hline
 number measurements &1 &2 &4 &8 \\ \hline
 forward & \textbf{195.9}  & 196.0 & \textbf{190.5} & \textbf{188.1} \\  \hline
 reverse & 195.8  & \textbf{196.1}  & 189.1 & 186.9 \\
 \hline
\end{tabular}
 \caption{ELBO of the algorithms over 5 runs. Calculated for the measurements based on 10000 samples.}
 \label{new_elbo}
\end{table}

\begin{figure}
\begin{subfigure}[t]{.49\textwidth}
\includegraphics[width=\linewidth]{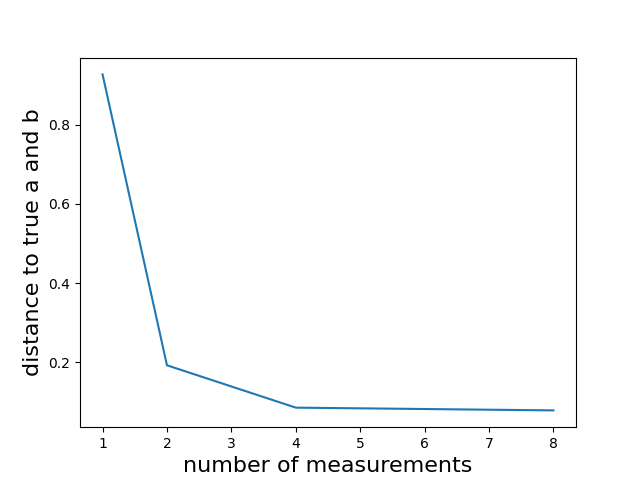}
        \caption{Forward KL conditional DeepGEM.}
        \label{fig:forward_error_new}
        \end{subfigure}
\begin{subfigure}[t]{.49\textwidth}
\includegraphics[width=\linewidth]{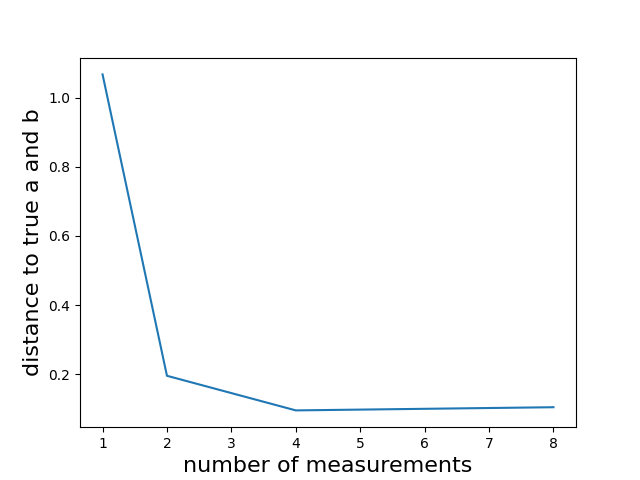}
    \caption{Reverse KL conditional DeepGEM.}
    \label{fig:rev_error_new}
\end{subfigure}
\caption{Distance to the hyperparameters (a,b) for forward and reverse KL conditional DeepGEM.}
\label{fig:err1}
\end{figure}

\begin{figure}
\centering
\begin{subfigure}[t]{.49\textwidth}
\includegraphics[width=\linewidth]{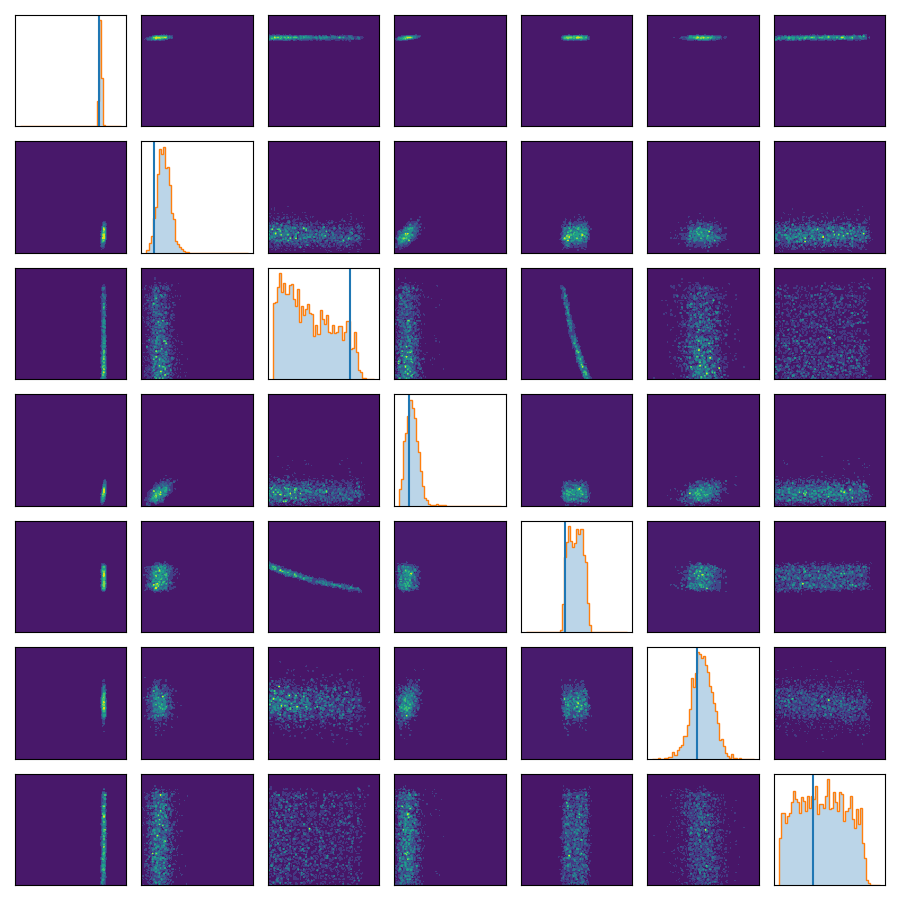}
        \caption{Posterior reconstruction for forward KL conditional DeepGEM for one simulated measurement.}
\end{subfigure}
\begin{subfigure}[t]{.49\textwidth}
\includegraphics[width=\linewidth]{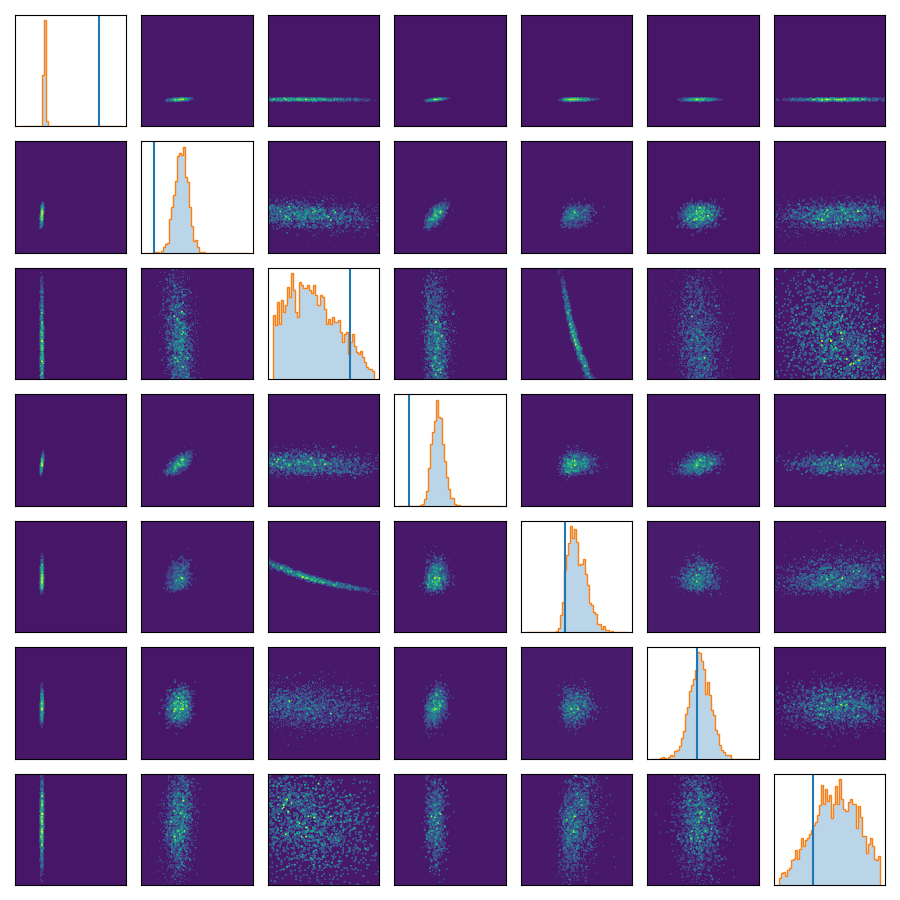}
        \caption{Posterior reconstruction for reverse KL conditional DeepGEM for one simulated measurement.}
\end{subfigure}
\begin{subfigure}[t]{.49\textwidth}
\includegraphics[width=\linewidth]{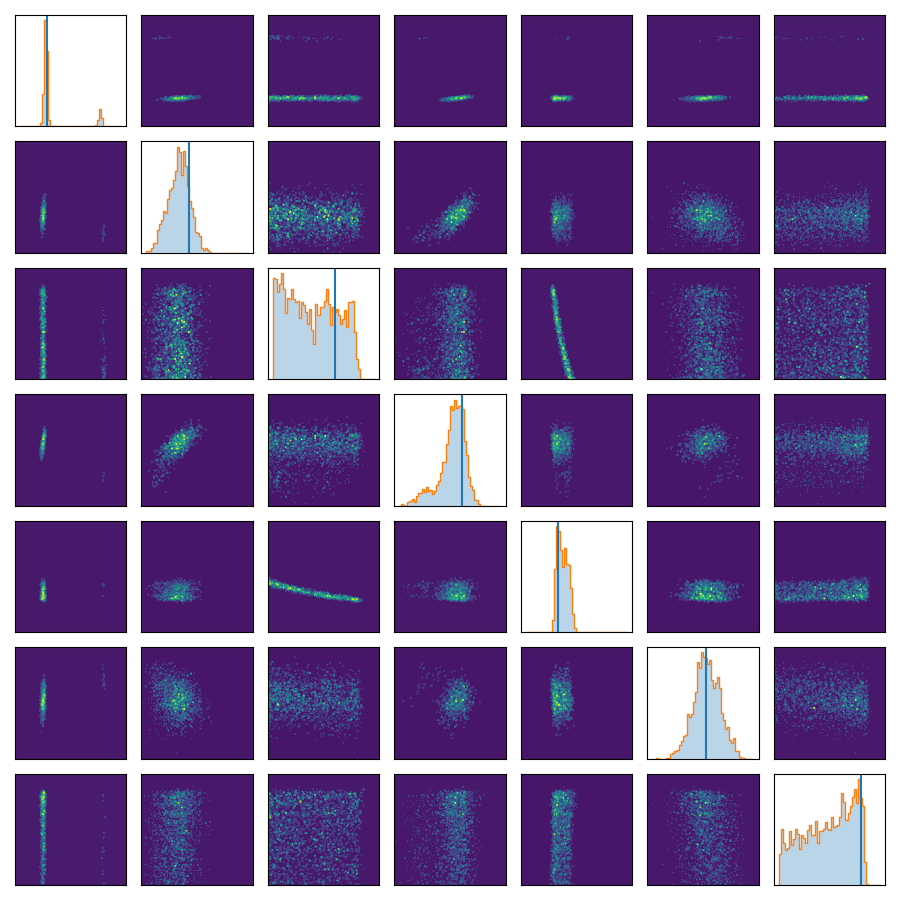}
        \caption{Posterior reconstruction for forward KL conditional DeepGEM for another simulated measurement.}
\end{subfigure}
\begin{subfigure}[t]{.49\textwidth}
\includegraphics[width=\linewidth]{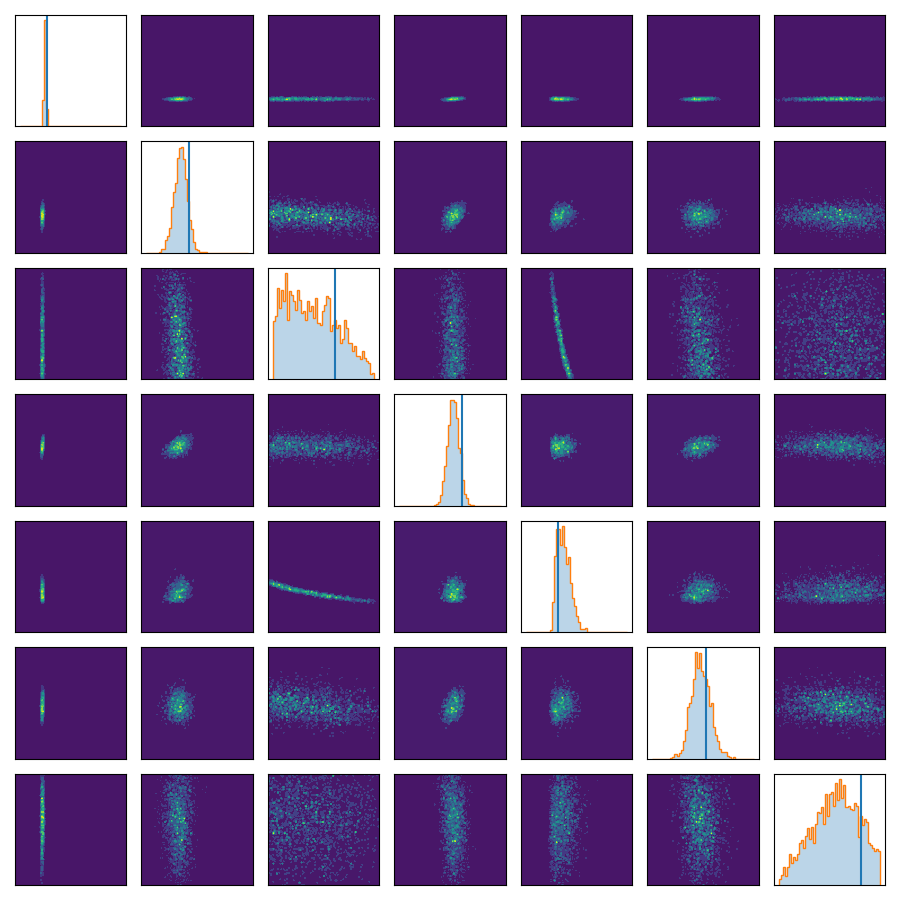}
        \caption{Posterior reconstruction for reverse KL conditional DeepGEM for another simulated measurement.}
\end{subfigure}
\caption{Posterior reconstructions for different measurements using forward/reverse conditional DeepGEM.}
\label{fig:new_posts}
\end{figure}

\section{Conclusions and Limitations} \label{sec:conc}
We developed a nested EM algorithms, one for estimating the posterior distribution via a conditional NF and a second one to solve the M-step within the former EM algorithm 
to estimate the  error model parameters. 
For the special kind of non-additive noise appearing in our applications we 
derived analytic formulas for the inner E- and M-steps. 
We showed advantages of using the forward KL for modelling multimodal distributions. The reverse KL often led to mode collapse. However, there has been a plethora of literature tackling this issue of the reverse KL, namely \cite{AnnealedFlowTransport2021,midgley2023flow, Vaitl_2022, mate2023learning}. It would be interesting to compare these approaches to the forward KL.
Moreover, we could replace the conditional normalizing flow by other methods for posterior sampling like score-based diffusion models \cite{HJA2020,SDBK2023}, conditional GANs \cite{MO2014} or posterior MMD flows \cite{HHABCS2023}.
Furthermore, we chose synthetic $a_{true}$ and $b_{true}$. One of the next steps is to test these approaches on real world measurements. Even if the novel algorithm was applied to two specific real world experiments, it may have an impact to a wide range of applications where indirect measurements are involved. The extension of the algorithm to other noise distributions than Gaussian is analogous. An advantage over standard approaches like Markov-Chain Monte Carlo methods is the fact that once the network has been trained, further similar measurements can be evaluated very quickly. This benefit opens the possibility of scatterentry and similar measurement techniques for real time applications, e.g. important for process control.
In terms of \emph{limitations}, it would be also interesting to test the algorithm on other inverse problems. Intuitively, we believe that the scatterometric inverse problem is particularly well-suited for these estimations since the observed $f(X)$-data is living on a low-dimensional manifold in a nominal high-dimensional space. One can indeed easily think of an inverse problem, where recovering noise parameters is much harder if the observed data already lies in the full space. 

\subsection*{Acknowledgements}
P.H. acknowledges support from the DFG within the SPP 2298 "Theoretical Foundations of Deep Learning" (STE 571/17-1). J.H.~acknowledges funding by the EPSRC programme grant ``The Mathematics of Deep Learning'' with reference EP/V026259/1.
M.C. and S.H. acknowledges the support of the EMPIR project 22IND04-ATMOC. This project (20IND04 ATMOC) has received funding from the EMPIR programme cofinanced by the Participating States and from the European Union’s Horizon 2020 research and innovation programme.

\subsection*{Data availability statement}
The code and data in the form of surrogate networks that support the findings of this study are openly available. This means, the surrogates and the code for running and reproducing the experiments is available under https://github.com/PaulLyonel/ConditionalDeepGEM. For the second photo mask problem we have training data from the forward simulation/simulation of the measurement under https://doi.org/10.5281/zenodo.10580011.

\bibliography{ref}
\appendix
\section{Derivation of the inner M-step}\label{m_step_deriv}
For the simplicity of the notation, we use the abbreviation
\begin{align}\label{eq_def_Qi_nle}
Q_i=P_{V_{\theta^{(r)}}|(X,Y_{\theta^{(r)}})=(x_i,y_i)}=\mathcal N\Big(\tfrac{(a^{(r)})^2(y_i-F(x_i))}{(a^{(r)})^2+(b^{(r)})^2F(x_i)^2},\mathrm{diag}\big(\tfrac{(a^{(r)})^2(b^{(r)})^2 F(x_i)^2}{(a^{(r)})^2+(b^{(r)})^2 F(x_i)^2}\big)\Big).
\end{align}
Using the decomposition \eqref{eq_elbo_decomp} of the ELBO, and noting that the second summand within \eqref{eq_elbo_decomp} does
not depend on the parameters $\theta = (a,b)$, we obtain that the optimization problem \eqref{eq_mstep} is equivalent to
\begin{equation}\label{eq_opt_mstep_nle}
\begin{aligned}
\theta^{(r+1)}&=\argmax_{\theta\in\Theta}\sum_{i=1}^N \frac{1}{N}\E_{v\sim Q_i}[\log(p_{V_\theta,X,Y_\theta}(v,x_i,y_i))]\\
&=\argmax_{\theta\in\Theta}\sum_{i=1}^N \frac{1}{N}\E_{v\sim Q_i}[\log(p_{V_\theta,Y_\theta|X=x_i}(v,y_i))]+\log(p_X(x_i))\\
&=\argmax_{\theta\in\Theta}\sum_{i=1}^N \frac{1}{N}\E_{v\sim Q_i}[\log(p_{V_\theta,Y_\theta|X=x_i}(v,y_i))]
\end{aligned}
\end{equation}
Now, the objective function reads as
\begin{align}
&\quad\sum_{i=1}^N \frac{1}{N}\E_{v\sim Q_i}[\log(p_{V_\theta,Y_\theta|X=x_i}(v,y_i))]\\
&=\sum_{i=1}^N \frac{1}{N}\E_{v\sim Q_i}[\log(p_{Y_\theta|V_\theta=v,X=x_i}(y_i))+\log(p_{V_\theta|X=x_i}(v))].
\end{align}
As it holds by definition that 
$p_{Y_\theta|V_\theta=v,X=x_i}(y_i)=\mathcal N(y_i|F(x_i)+v,b^2\mathrm{diag}(F(x))^2)$
and 
$p_{V_\theta|X=x_i}(v)=p_{V_\theta}(v)=\mathcal N(v|0,a^2 I_n)$, 
this is equal to
\begin{align}
&\quad\sum_{i=1}^N \frac{1}{N}\E_{v\sim Q_i}[\log(\mathcal N(y_i|F(x_i)+v,b^2\mathrm{diag}(F(x_i))^2)+\log(\mathcal N(v|0,a^2 I))]
=A_1(b)+A_2(a),
\end{align}
where (leaving out constants with respect to a or b)
\begin{align*}
A_1(b)&= \frac{1}{N} \sum_{i=1}^N\E_{v\sim Q_i}[\log(\mathcal N(y_i|F(x_i)+v,b^2\mathrm{diag}(F(x_i))^2)]\\
&=-\frac{1}{2Nb^2}\sum_{i=1}^N\sum_{j=1}^n\frac{1}{F_j(x_i)^2}\Big((y_{ij}-F_j(x_i))^2
-2\E_{v\sim Q_i}[v_j](y_{ij}-F_j(x_i))+\E_{v\sim Q_i}[v_j^2]\Big)\\
&\quad-\frac{n}{2}\log(b^2),
\end{align*}
and
\begin{align}
A_2(a)&=\frac{1}{N}\sum_{i=1}^N  \E_{v\sim Q_i}[\log(\mathcal N(v|0,a^2 I))]=-\frac{1}{2a^2\ N}\sum_{i=1}^N\sum_{j=1}^n\E_{v\sim Q_i}[v_j^2]-\frac{n}{2}\log(a^2).
\end{align}
Now, by \eqref{eq_def_Qi_nle}, the expressions $\E_{v\sim Q_i}[v_j]$ and $\E_{v\sim Q_i}[v_j^2]$ are the first and second
moment of certain normal distributions, such that
$$
\E_{v\sim Q_i}[v_j]=\frac{(a^{(r)})^2(y_{ij}-F_j(x_i))}{(a^{(r)})^2+(b^{(r)})^2F_j(x_i)^2}
$$
and
$$
\E_{v\sim Q_i}[v_j^2]=\Big(\frac{(a^{(r)})^2(y_{ij}-F_j(x_i))}{(a^{(r)})^2+(b^{(r)})^2F_j(x_i)^2}\Big)^2
+\frac{(a^{(r)})^2(b^{(r)})^2 F_j(x_i)^2}{(a^{(r)})^2+(b^{(r)})^2F_j(x_i)^2}.
$$
Putting everything together, we obtain that \eqref{eq_opt_mstep_nle} is equivalent to
$$
\theta^{(r+1)}=\argmax_{(a,b) \in \R^2_{\ge 0}} A_1(b)+A_2(a)
$$
with
$$
 A_1(b)=\frac{1}{2b^2}c_1^{(r)}-n \log(b)+\mathrm{const},\quad A_2(a)=\frac{1}{2a^2}c_2^{(r)}-n \log(a)+\mathrm{const},
$$
where $\mathrm{const}$ denotes an unspecified constant independent of $a$ and $b$.
Further, the $c_i^{(r)}$ are given by
\begin{align}
c_2^{(r)}=-\frac{1}{N}\sum_{i=1}^N \bigg(\sum_{j=1}^n \Big(\frac{(a^{(r)})^2(y_{ij}-F_j(x_i))}{(a^{(r)})^2+(b^{(r)})^2F_j(x_i)^2}\Big)^2+\frac{(a^{(r)}b^{(r)}F_j(x_i))^2}{(a^{(r)})^2+(b^{(r)}F_j(x_i))^2}\bigg),
\end{align}
and
\begin{align}
c_1^{(r)}&=-\frac{1}{N}\sum_{i=1}^N\sum_{j=1}^n\frac{1}{F_j(x_i)^2}
\Big((y_{ij}-F_j(x_i))^2-2\frac{(a^{(r)})^2(y_{ij}-F_j(x_i))^2}{(a^{(r)})^2+(b^{(r)})^2F_j(x_i)^2}\\
&\quad +\Big(\frac{(a^{(r)})^2(y_{ij}-F_j(x_i))}{(a^{(r)})^2+(b^{(r)})^2F_j(x_i)^2}\Big)^2
+\frac{(a^{(r)}b^{(r)}F_j(x_i))^2}{(a^{(r)})^2+(b^{(r)}F_j(x_i))^2}\Big).
\end{align}
Bringing  the first three terms onto one denominator, this can be simplified to
\begin{align}
c_1^{(r)}
&=-\frac{1}{N}\sum_{i=1}^N\sum_{j=1}^n\frac{(y_{ij}-F_j(x_i))^2(b^{(r)})^4F_j(x_i)^2}{((a^{(r)})^2
+(b^{(r)}F_j(x_i))^2)^2}+\frac{(a^{(r)}b^{(r)})^2}{(a^{(r)})^2+(b^{(r)}F_j(x_i))^2}.
\end{align}
Note that by definition $c_i^{(r)}$, $i=1,2$ are non-positive.
Thus, $a^{(r+1)}$ and $b^{(r+1)}$ are given by
$$
a^{(r+1)}=\argmax_{a\geq 0} A_2(a),
\qquad 
b^{(r+1)}=\argmax_{b\geq 0} A_1(b).
$$
By setting the derivatives of $A_1$ and $A_2$ to zero, this is equivalent to
$$
a^{(r+1)}=-\frac{c_2^{(r)}}{n},\qquad b^{(r+1)}=-\frac{c_1^{(r)}}{n}.
$$

\section{Convergence plots of a and b}
Here we showcase the convergence of the parameters $a$ and $b$ in the first scatterometry photo mask example. For one particular run the convergence of $a$ and $b$ is shown. Note that both methods seem to converge to the true $a$ and $b$ values, however the reverse KL trains more unstably, which might be tackable with more careful hyperparameter selection or other stabilization techniques. 

\begin{figure}
\begin{subfigure}[t]{.49\textwidth}
\includegraphics[width=\linewidth]{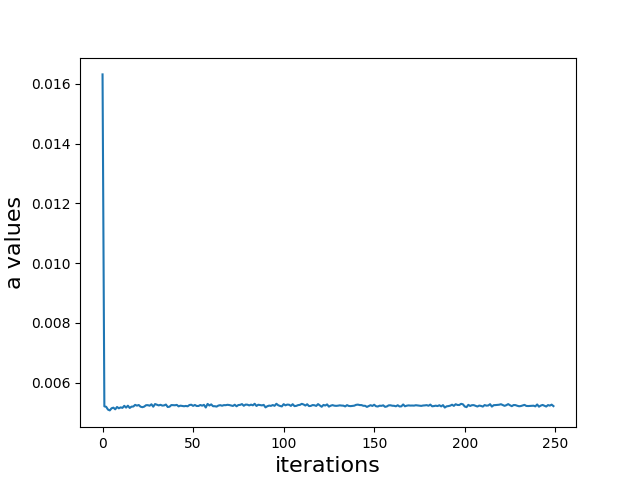}
        \caption{Forward KL cond. DeepGEM plot for a.}
        \end{subfigure}
        \begin{subfigure}[t]{.49\textwidth}
\includegraphics[width=\linewidth]{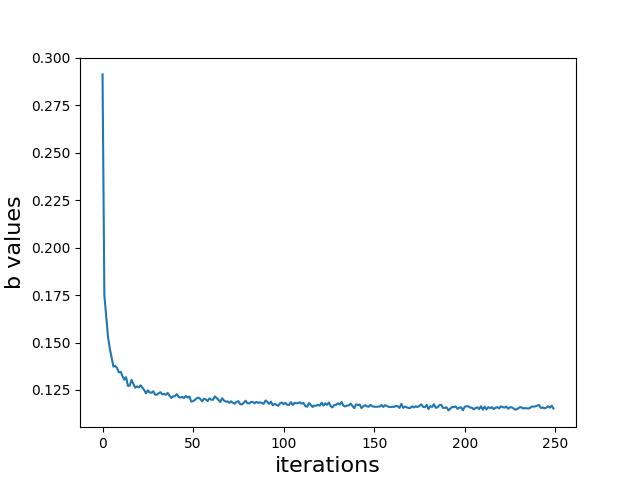}
        \caption{Forward KL cond. DeepGEM plot for b.}
        \end{subfigure}
\begin{subfigure}[t]{.49\textwidth}
\includegraphics[width=\linewidth]{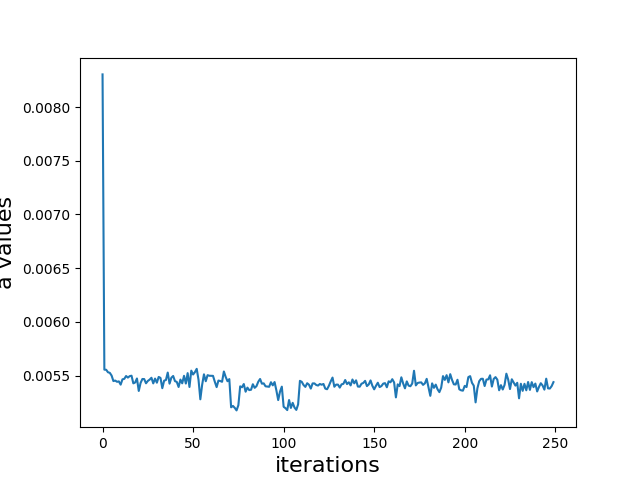}
        \caption{Reverse KL cond. DeepGEM plot for a.}
        \end{subfigure}
        \begin{subfigure}[t]{.49\textwidth}
\includegraphics[width=\linewidth]{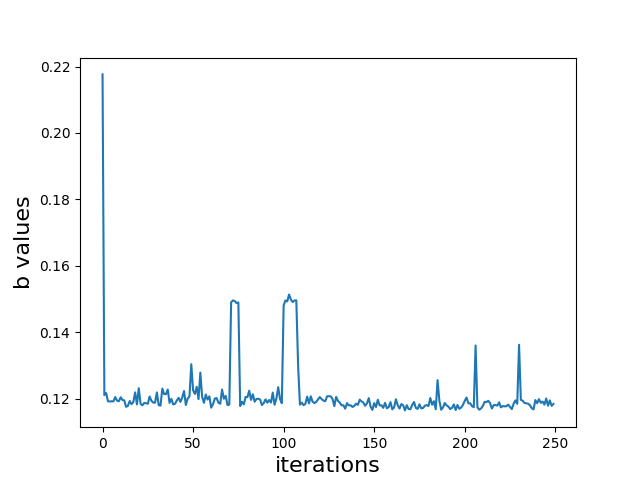}
        \caption{Reverse KL cond. DeepGEM plot for b.}
        \end{subfigure}
\caption{Convergence plots for $a$ and $b$ where we save every 20 EM-steps.}
\label{fig:abconv}
\end{figure}

\section{Sensitivity analysis of the forward model}
\label{sec:sens_ana}

\begin{figure}
    \centering
    \includegraphics[width= 0.6\linewidth]{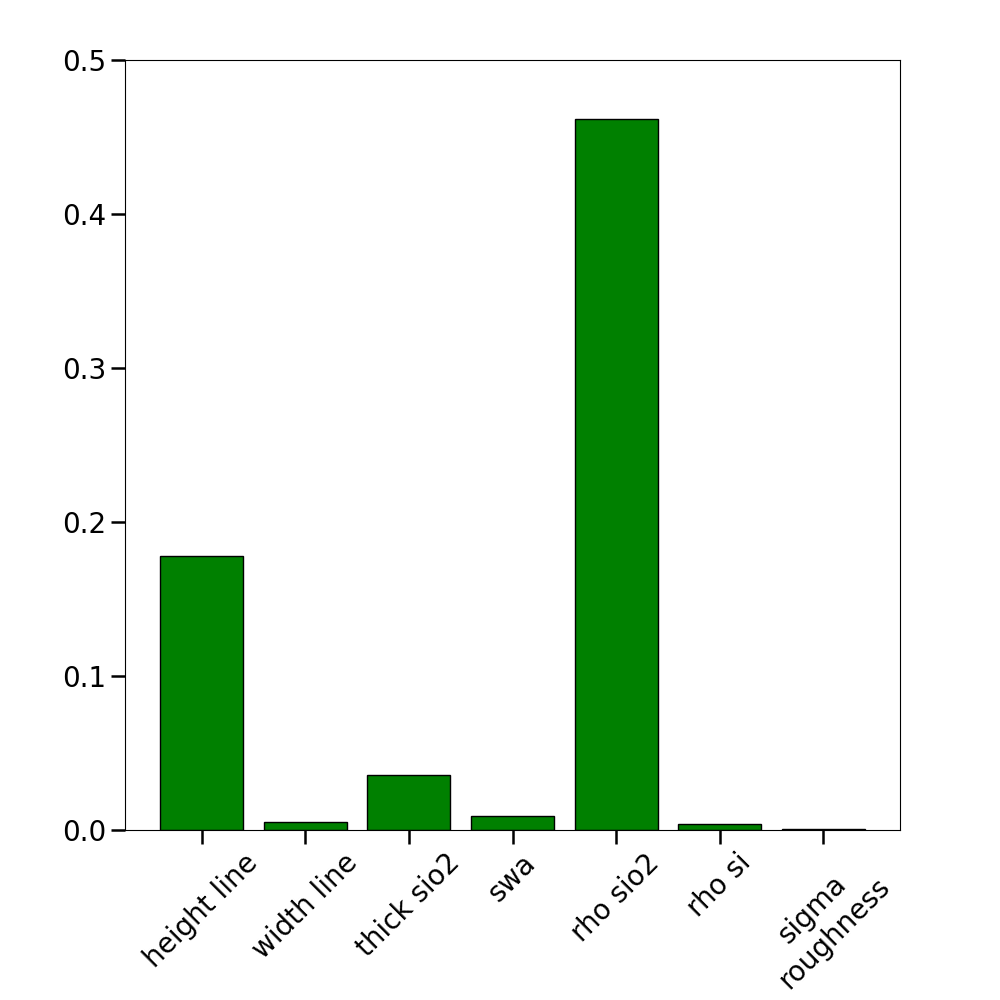}
    \caption{Barplot of Sobol` indices for PC-expansion of the forward model.}
    \label{fig:sobol_indices_line_grating}
\end{figure}

To verify our results for the reconstruction of the line grating in \ref{subsec_line_grating}, we make a sensitivity analysis of the corresponding forward model. For the sensitivity analysis we have to determine the Sobol` indices, which are coefficients of a decomposition of the forward model and describe the impact of each parameter combination on the forward model. Those indices are normalized to $[0,1]$ and sum up to $1$ \cite{sobolGlobalSensitivityIndices2001a,SensitivityEstimatesNonlinear1993}. Making an approximation of the forward model in a polynomial basis using Polynomial Chaos (PC) \cite{wienerHomogeneousChaos1938a,farchminEfficientApproachGlobal2019} makes it very easy to calculate the Sobol` indices.
The indices in Fig.\,\ref{fig:sobol_indices_line_grating} come from a PC-approximation with a relative $L_2$-error of about $0.076$ and show the dependence on each single parameter. It is clearly seen, that the height of the grating line and the density of the oxide layer and hence the OC for silicon-oxide has a huge impact on the forward model. In general the sensitivity analysis fits very well to the reconstruction of the line grating, since in Fig.\,\ref{fig:new_posts} the distributions for parameter $0$ and $4$ are very sharp defined, while those which are very broad distributed also show a low impact on the forward model. For the sensitivity analysis the source software tool PyThia was used \cite{hegemann2023pythia}. 

\end{document}